%
%

\NeedsTeXFormat{LaTeX2e}

\documentclass{new_tlp}

\usepackage{amssymb}
\usepackage{amsmath}
\usepackage{url}
\usepackage{xspace}
\usepackage{paralist}
\usepackage{mathptmx}


\newcommand{\pr}[1]{\m{\ensuremath{\mathsf{#1}}}}
\newcommand{\m}[1]{\ensuremath{#1}\xspace}
\newcommand{\limplies}{\m{\Rightarrow}}
\newcommand{\lequiv}{\m{\Leftrightarrow}}
\newcommand{\lrule}{\m{\leftarrow}}
\newcommand{\struct}{\m{I}}
\newcommand{\PP}{\m{\mathcal{P}}}
\newcommand{\system}[1]{\textsc{#1}\xspace}
\newcommand{\orderof}[1]{\mathcal{O}(#1)}
\newcommand{\theory}{\m{T}}
\newcommand\EAQBF{\m{\exists\forall\text{QBF}}}
\newcommand{\voc}{\m{\sigma}}
\newcommand{\arc}[3]{{#1}\stackrel{#2}{\longrightarrow}{#3}}
\newcommand{\head}{\m{\mathit{head}}}
\newcommand{\body}{\m{\mathit{body}}}
\newcommand{\infpred}{\pr{inf}}
\newcommand{\CC}{\m{\mathcal{C}}}

\newcommand\citet[1]{\citeNS{#1}\xspace} 


\newtheorem{theorem}{Theorem}[section]
\newtheorem{definition}[theorem]{Definition}
\newtheorem{example}[theorem]{Example}
\newtheorem{proposition}[theorem]{Proposition}


\title[Stable-Unstable Semantics: 
Beyond NP with Normal Logic Programs]{%
Stable-Unstable Semantics: \\
Beyond NP with Normal Logic Programs}

\author[B. Bogaerts and T. Janhunen and S. Tasharrofi]{%
BART BOGAERTS, TOMI JANHUNEN and SHAHAB TASHARROFI\\
Helsinki Institute for Information Technology HIIT%
\thanks{The support from the Finnish Center of Excellence in Computational
Inference Research (COIN) funded by the Academy of Finland
(under grant \#251170) is gratefully acknowledged.}
Department of Computer Science \\
Aalto University, FI-00076 AALTO, Finland\\
\email{firstname.lastname@aalto.fi}}

\jdate{March 2003}
\pubyear{2003}
\pagerange{\pageref{firstpage}--\pageref{lastpage}}
\doi{S1471068401001193}


\begin{document}

\label{firstpage}

\maketitle

\begin{abstract}
Standard answer set programming (ASP) targets at solving search
problems from the first level of the polynomial time hierarchy (PH).
Tackling search problems beyond NP using ASP
is less straightforward.
The class of disjunctive logic programs offers the most prominent way
of reaching the second level of the PH, but encoding respective hard
problems as disjunctive programs typically requires sophisticated
techniques such as saturation or meta-interpretation. The
application of such techniques easily leads to encodings that are
inaccessible to non-experts. Furthermore, while disjunctive ASP
solvers often rely on calls to a (co-)NP oracle, it may be difficult
to detect from the input program where the oracle is being accessed.
In other formalisms, such as Quantified Boolean Formulas (QBFs), the
interface to the underlying oracle is more transparent as it is
explicitly recorded in the quantifier prefix of a formula. On the
other hand, ASP has advantages over QBFs from the modeling
perspective. The rich high-level languages such as ASP-Core-2
offer a wide variety of primitives that enable concise and natural
encodings of search problems.
In this paper, we present a novel logic programming--based modeling
paradigm that combines the best features of ASP and QBFs. We develop
so-called \emph{combined logic programs} in which oracles are directly
cast as (normal) logic programs themselves. Recursive incarnations of
this construction enable logic programming on arbitrarily high levels
of the PH. We develop a proof-of-concept implementation for our new paradigm. 
This paper is under consideration for acceptance in TPLP.
\end{abstract}

\begin{keywords}
disjunctive logic programming,
polynomial hierarchy, 
quantified Boolean formulas
\end{keywords}

  
\section{Introduction}
\label{sec:intro}

With the launch of the idea that stable models \cite{GL88:iclp}
of a logic program can be used to encode search problems, a new
programming paradigm, called Answer Set Programming (ASP) was born
\cite{MT99,Niemela99:amai,Lifschitz99:iclp}.
Nowadays, the fact that normal logic programs can effectively encode
NP-complete decision and function problems is exploited in
applications in many different domains such as
robotics \cite{ARSS15:lpnmr},
machine learning \cite{JGRNPC15:sc,BBBDDJLRDV15:tplp},
phylogenetic inference \cite{KOJS15:tplp,BEEMR07:jar},
product configuration \cite{TSNS03:iced},
decision support for the Space Shuttle \cite{NBGWB01:padl},
e-Tourism \cite{RDGIIML10:fi},
and knowledge management \cite{GILR09:lpnmr}.

Tackling search problems beyond NP with ASP requires one to use more
expressive logic programs than the normal ones. To this end, the class
of disjunctive programs \cite{GL91:ngc} is the most prominent
candidate. As shown by \citet{EG95:amai}, the main decision
problems associated to disjunctive programs are $\Sigma^P_2$- and
$\Pi^P_2$-complete, depending on the reasoning mode, i.e.,
\emph{credulous} vs.\ \emph{cautious} reasoning.
But when it comes to applications, one encounters disjunctive
encodings less frequently than encodings as normal logic
programs. This is also witnessed by the benchmark problems submitted
to ASP competitions \cite{CGMR16:aij}. Such a state of affairs
is not due to a lack of application problems since many complete
problems from the second level of the PH are known. Neither is it due
to a lack of implementations, since state-of-the-art ASP solvers such
as
\system{dlv} \cite{LPFEGPS06:acmtocl} and \system{clasp}
\cite{DGGKKOS08:kr,GKKRS15:lpnmr} offer a seamless support for
disjunctive programs.

An explanation for the imbalance identified above can be found in the
essentials of disjunctive logic programming when formalizing problems
from the second level of the PH.  There are results \cite{BED94:amai}
showing that such programs must involve \emph{head cycles}, i.e.,
cyclic positive dependencies established by the rules of the program
that intertwine with the disjunctions in the program. Such
dependencies may render disjunctive programs hard to understand and to
maintain.
Moreover, the existing generic encodings of complete problems from the
second level of the PH as disjunctive programs are based on sophisticated
\emph{saturation} \cite{EG95:amai} or
\emph{meta-interpretation} \cite{GKS11:tplp}
techniques, which may turn an encoding inaccessible
to a non-expert. \citet{EP06:tplp} identify the
limitations of subprograms that act as (co-)NP-oracles and are embedded
in disjunctive programs using the saturation technique.
Summarizing our observations, the access to the underlying oracle is
somewhat cumbersome and difficult to detect from a given disjunctive
program. Interestingly, the oracle is better visible in native
implementations of disjunctive logic programs
\cite{JNSSY06:tocl,DGGKKOS08:kr}
where two ASP solvers cooperate: one is responsible for
\emph{generating} model candidates and the other for \emph{testing}
the minimality of candidates. In such an architecture, a successful
minimality test amounts to showing that a certain subprogram has no
stable models.

In other formalisms, the second level of the PH is reached differently.
For instance, \emph{quantified Boolean formulas} (QBFs)
\cite{SM73:stoc},
record the interface between existentially and universally quantified
subtheories, intuitively corresponding to the generating and testing
programs mentioned above, explicitly in the quantifier prefix of the
theory. From a modelling perspective, on one hand, QBFs support the
natural formalization of subproblems as subtheories and the
quantifications introduced for variables essentially identify the
oracles involved.
On the other hand, logic programs also have some advantages over QBFs.
Most prominently, they allow for the natural encodings of
\emph{inductive definitions}, not to forget about \emph{default
  negation}, \emph{aggregates} and \emph{first-order features}
available in logic programming.  The rich high-level modelling
languages such as ASP-Core-2 \cite{aspcore2} offer a wide variety of
primitives that are not available for QBFs and require substantial
elaboration if expressed as part of a QBF.

In this paper, we present a novel logic programming--based modeling
paradigm that combines the best features of ASP and QBF. We introduce
the notion of a \emph{combined logic program} which explicitly
integrates a normal logic program as an oracle to another program. The
semantics of combined programs is formalized as
\emph{stable-unstable models}
whose roots can be recognized from earlier work of
\citet{EP06:tplp}.
Our design directly reflects the generate-and-test methodology
discussed above, enabling one to encode problems up to the second
level of the PH. Compared to disjunctive programs, our approach is
thus closer to QBFs and if the same design is applied recursively, our
new formalism can be adapted to tackle problems arbitrarily high in
the PH, in analogy to QBFs. We develop a proof-of-concept solver for
our new formalism on top of the recently introduced
solver \system{sat-to-sat} \cite{JTT16:aaai},
which is based on an architecture of two interacting, conflict-driven
clause learning (CDCL) SAT solvers. The solver capable of searching
for stable-unstable models is obtained using the methodology of
\citet{BJT16:kr}, who automatically translate a second-order
specification, combined with data that represents the involved ground
programs in a reified form, into a \system{sat-to-sat} specification. The details of
the solver architecture are hidden from the user so that a user
experience similar to native ASP solvers is obtained, where the user
inputs two logic programs in a familiar syntax and the solver produces
answer sets.

The rest of this paper is structured as follows.
In Section \ref{sec:related}, we discuss related work in more detail.
We recall some basic notions of logic programs in Section
\ref{sec:lp}.
Afterwards, in Section \ref{sec:new}, we present our new logic
programming methodology. We illustrate how it can be used to tackle
some problems from the second level of the PH in Section
\ref{sec:apps}.
In Section \ref{sec:impl}, we
show how our new formalism can
be implemented on top of \system{sat-to-sat}.
We show how our formalism naturally extends beyond the second level
of the PH in Section \ref{sec:beyond}
and conclude the paper in Section \ref{sec:concl}.


\section{Related Work}
\label{sec:related}

A fundamental technique to encode $\Sigma^P_2$-complete problems as
disjunctive programs is known as \emph{saturation}.  The technique
goes back to the $\Sigma^P_2$-completeness proof for the existence of
stable models in the case of disjunctive programs
\cite{EG95:amai}.
Although saturation can be applied in a very systematic fashion to some
programs of interest,
\citet{EP06:tplp}
identify the impossibility of having negation as a central limitation
of oracles encoded by saturation, rendering the oracle call to a bare
minimality check rather than showing that an oracle program has no
stable models. This limitation can be partially circumvented using
\emph{meta-interpretation}
\cite{EP06:tplp,GKS11:tplp},
but these techniques do not necessarily decrease the
\emph{conceptual complexity}
of disjunctive programming from the user's perspective.

The approach of \citet{EP06:tplp} is perhaps most closely
related to our work. They present a transformation of two
\emph{head-cycle free} (HCF) disjunctive logic programs
$(\PP_g,\PP_t)$, where $\PP_g$ and $\PP_t$ form the \emph{generating}
and \emph{testing} programs, into a disjunctive program $\PP_{c}$.
In our terminology, the \emph{stable-unstable} models of the
\emph{combined program} $(\PP_g,\PP_t)$ are in one-to-one
correspondence with the stable models of $\PP_c$. Thus, their approach
is based on essentially the same base definition.
However, their transformation counts on meta-interpretation and
$\PP_c$ is encoded as a disjunctive meta program to capture the
intended semantics of $(\PP_g, \PP_t)$. A similar meta-encoding can be
obtained using the approach of \citet{GKS11:tplp}, but stable-unstable
semantics is not explicit in their work. Since these meta programming
approaches use disjunctive logic programs as the back end formalism,
they are inherently confined to the second level of the PH.
Our approach, on the other hand, easily generalizes for the classes of
the entire PH, as to be shown in Section~\ref{sec:beyond}. Moreover,
when
\citet{EP06:tplp}
translate $(\PP_g,\PP_t)$ into a disjunctive logic program the
essential structural distinction between $\PP_g$ and $\PP_t$ is lost.
Many disjunctive answer set solvers
\cite{JNSSY06:tocl,DGGKKOS08:kr}
try to recover this interface due to their internal data
structures. In our approach, the generate-and-test structure of the
original problem is explicitly present in the input presented to the solver. 

While meta programming can be viewed as a front-end to disjunctive
logic programming, the goal of our work is to foster the idea of
generate-and-test programs as a basis for a logic programming
methodology that complexity-wise covers the entire PH. In this paper,
we present a proof-of-concept implementation based on the recursive
\system{sat-to-sat} solver architecture \cite{JTT16:aaai,BJT16:bnp}. It is
reasonable to expect that such an architecture can be realized in the
future using native ASP solvers as building blocks, too, thus
eliminating the need for second-order interpretation.

Another formalization of a similar idea was worked out by
\citet{EGV97:lpnmr}, based on the theory of generalized quantifiers
\cite{Mostowski57:fm,Lindstrom66:th}. The semantics we propose for combined
logic programs can be obtained as a special case of a (stratified)
logic program with generalized quantifiers \cite{EGV97:lpnmr}. One
important difference is that in our approach, the interaction between
the two programs is fixed: one program serves as generator and the
second as a tester program.  The approach of \citet{EGV97:lpnmr} is more
general in the sense that it allows for other types of interaction as
well.  The price to pay for this generality is that the interaction
between programs needs to be specified explicitly by users, resulting
in a more error-prone modelling process. Moreover, in our approach,
the input expected from the user is a set of source files in a
familiar syntax (ASP-Core-2), requiring no syntactic extension for
quantification.


\section{Preliminaries: Logic Programming}
\label{sec:lp}

In this section, we recall some preliminaries from logic programming.
The new semantics is only formulated for propositional programs but,
in practice, the users are not expected to write propositional
programs. Instead, they are supposed to use grounders, such as the
state-of-the-art grounder \system{Gringo}, to transform first-order programs
to propositional ones.

A \emph{vocabulary} is a set of symbols, also called \emph{atoms};
vocabularies are denoted by $\voc,\tau$. A \emph{literal} is an atom
or its negation.  A \emph{logic program} $\PP$ over vocabulary $\voc$
is a set of \emph{rules} $r$ of form
\begin{equation}\label{eq:rule} 
h_1\lor \dots \lor h_l
\lrule a_1\land \dots \land a_n \land \lnot b_1\land \dots \land \lnot b_m.
\end{equation}
where $h_i$'s, $a_i$'s, and $b_i$'s are atoms in $\voc$. We call
$h_1\lor \dots \lor h_l$ the \emph{head} of $r$, denoted $\head(r)$,
and $a_1\land \dots \land a_n \land \lnot b_1\land \dots \land \lnot
b_m$ the \emph{body} of $r$, denoted $\body(r)$. A program is
\emph{normal} (resp. \emph{positive}) if $l=1$ (resp. $m=0$) for all
rules in \PP. If $n=m=0$, we simply write $h_1\lor \dots \lor h_l$.

An interpretation $\struct$ of a vocabulary \voc is a subset of
$\voc$.  An interpretation $I$ is a \emph{model} of a logic program
\PP if, for all rules $r$ in \PP, whenever $\body(r)$ is satisfied by
$I$, so is $\head(r)$.  The \emph{reduct} of \PP with respect to $I$,
denoted $\PP^I$, is the program that consists of rules $ h_1\lor \dots
\lor h_l \lrule a_1\land \dots \land a_n $ for all rules of the form
\eqref{eq:rule} in \PP such that $b_i\not\in I$ for all $i$.
An interpretation $I$ is a \emph{stable model} of \PP if it is a
$\subseteq$-minimal model of $\PP^I$ \cite{GL88:iclp}.

\emph{Parameterized logic programs} have been implicitly present in
the literature for a long time, by assigning a meaning to
\emph{intensional} databases. They have been made explicit in various
forms
\cite{GP96:ijseke,OJ06:ecai,DV07:lpnmr,DLTV12:iclp}.
We briefly recall the basics. Assume that $\tau\subseteq \voc$ and
$\PP$ is a logic program over $\voc$ such that no atoms from $\tau$
occur in the head of a rule in $\PP$.  We call $I$ a
\emph{parameterized stable model} of $\PP$ with respect to
\emph{parameters} $\tau$ if $I$ is a stable model of
$\PP\cup (I\cap \tau)$.
Parameters $\tau$ are also known as \emph{external}, \emph{open}, or
\emph{input atoms}.

From time to time, we use syntactic extensions such as choice rules,
constraints, and cardinality atoms in this paper.
A \emph{cardinality atom} $m\leq \#\{l_1, \dots, l_n\} \leq k$ (with
$l_1, \dots, l_n$ being literals and $m, k \in \mathbb{N}$) is
satisfied by $I$ if $m \leq \#\{i\mid l_i\in I\} \leq k$. A
\emph{choice rule} is a rule with a cardinality atom in the head. A
\emph{constraint} is a rule with an empty head. An interpretation $I$
satisfies a constraint $c$ if it does not satisfy $body(c)$.  These
language constructs can all be translated to normal rules
\cite{BJ13:lpnmr}.  We also sometimes use the colon syntax $H :
L$ for conditional literals as a way to succinctly specify a set of
literals in the body of a rule or in a cardinality atom
\cite{GHKLS15:tplp}.


\section{Stable-Unstable Semantics}
\label{sec:new}

The design goal of our new formalism is to isolate the logic program
that is acting as an oracle for another program. Thus, we would like
to find a stable model $I$ for a program while showing the
\emph{non-existence} of stable models for the oracle program given $I$.
Following this intuition, we formalize the pair $(\PP_g,\PP_t)$ of a
\emph{generating} program $\PP_g$ and a \emph{testing} program $\PP_t$
as follows.%
\footnote{The terminology goes back to \system{GnT}, one of the early
solvers developed for disjunctive programs \cite{JNSSY06:tocl}.}

\begin{definition}[Combined logic program]
\label{def:combined-program}
A \emph{combined logic program} is pair $(\PP_g,\PP_t)$ of normal logic
programs $\PP_g$ and $\PP_t$ with vocabularies $\voc_g$ and
$\voc_t$ such that $\PP_g$ is parameterized by $\tau_g\subseteq\voc_g$
and $\PP_t$ is parameterized by $\voc_g\cap \voc_t$.
\end{definition}

The vocabulary of the program $(\PP_g,\PP_t)$ is $\voc_g$; it consists
of all symbols that are ``visible'' to the outside. Symbols in
$\voc_t\setminus \voc_g$ are considered to be \emph{quantified
  internally}.  The use of normal programs in the definition of
combined logic programs, or \emph{combined programs} for short, is a
design decision aiming at programs that are easily understandable
(compared to, for instance, disjunctive programs with head-cycles).
In principle, our theory also works when replacing normal programs
with another class of programs.
Our next objective is to define the semantics of combined programs
which should not be a surprise given the above intuitions.

\begin{definition}[Stable-unstable model]\label{def:semantics}
Given a combined program $(\PP_g,\PP_t)$ with vocabularies
$\voc_g$ and $\voc_t$, a $\voc_g$-interpretation $I$ is
a \emph{stable-unstable model} of $(\PP_g,\PP_t)$ if
the following two conditions hold:
\begin{enumerate}
\item
$I$ is a parameterized stable model of $\PP_g$ with respect to
$\tau_g$ (the parameters of $\PP_g$) and

\item
there is no parameterized stable model $J$ of $\PP_t$ that coincides
with $I$ on $\voc_t\cap\voc_g$ (i.e., such that $I\cap
{\voc_t}=J\cap {\voc_g}$).
\end{enumerate}
\end{definition}

The fact that a $\voc_g$-interpretation $I$ is a stable-unstable
model of $(\PP_g,\PP_t)$ is denoted $I\models_{su}
(\PP_g,\PP_t)$. Note that the testing program stands for the
\emph{non-existence} of stable models.
If $\voc_g\cap\voc_t\neq\emptyset$, the programs truly
interact. Otherwise, we call $(\PP_g,\PP_t)$ \emph{independent}.

\begin{example}\label{ex:small}
Let 
$\PP_1=\{0\leq \#\{c\}\leq 1.~~
         \lrule c \land d.~~
         \lrule \lnot c\land b.\}$
and
$\PP_2=\{0\leq \#\{a\} \leq 1.~~
         b \lrule a.\}$
where $\PP_1$ has vocabulary $\voc_1=\{c,b,d\}$ and
parameters $\tau_1=\{b,d\}$, and $\PP_2$ has vocabulary
$\voc_2=\{a,b,d\}$ and parameters $\tau_2=\{d\}$. The stable
models of $\PP_1$ and $\PP_2$ are, respectively,
$\{\{d\},\{b,c\},\{\},\{c\}\}$ and $\{\{d,a,b\}, \{d\}, \{a,b\},
   \{\}\}$.
Notice that $\tau_1=\voc_1\cap\voc_2$. The combined program
$(\PP_2,\PP_1)$ has parameters $\tau_2$ and has only one
stable-unstable model $\{d,a,b\}$ since all other stable models of
$\PP_2$ coincide with a stable model of $\PP_1$ on $\tau_1$.
\end{example}

\begin{theorem}\label{thm:complex}
Deciding the existence of a stable-unstable model for a \emph{finite}
combined program $(\PP_g,\PP_t)$ is $\Sigma^P_2$-\allowbreak{}complete
in general, and $D^P$-\allowbreak{}complete for
\emph{independent} combined programs.
\end{theorem}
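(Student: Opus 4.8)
The plan is to establish membership and hardness separately for each of the two claims. For the $\Sigma^P_2$ membership of the general case, I would exhibit a guess-and-check procedure: nondeterministically guess a $\voc_g$-interpretation $I$, verify in polynomial time that $I$ is a model of $\PP_g^I \cup (I \cap \tau_g)$ and, using a standard $\subseteq$-minimality argument, certify parameterized stability of $I$ with a single call to a co-NP oracle (checking that $\PP_g^I \cup (I\cap\tau_g)$ has no strictly smaller model). The second condition — that $\PP_t$ has no parameterized stable model agreeing with $I$ on $\voc_g \cap \voc_t$ — is itself a co-NP property (the complement, ``there exists such a stable model $J$,'' is in NP by the usual guess-$J$-and-check-stability-with-an-NP-oracle collapse, but more carefully it is $\Sigma^P_2$ in general and only NP because stability checking for normal programs is co-NP, so the whole ``exists stable $J$'' statement is $\Sigma^P_2$; I need to be careful here). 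The cleanest route: both the stability of $I$ for $\PP_g$ and the nonexistence of a matching stable $J$ for $\PP_t$ are decidable by an $\exists\forall$-quantified Boolean check over the ground programs, so the entire problem sits in $\Sigma^P_2$ by translating to $\EAQBF$. For $D^P$-membership in the independent case, observe that when $\voc_g \cap \voc_t = \emptyset$ the two conditions decouple completely: condition (1) is ``$\exists I$ stable model of $\PP_g$,'' an NP-complete question, and condition (2) becomes simply ``$\PP_t$ has no parameterized stable model,'' a co-NP question; a stable-unstable model exists iff the first holds \emph{and} the second holds, which is exactly the shape of a $D^P$ predicate (intersection of an NP set and a co-NP set).

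For hardness I would reduce from canonical complete problems. For $\Sigma^P_2$-hardness, reduce from $\exists\forall$-QBF validity: given $\exists \vec{x}\, \forall \vec{y}\, \phi(\vec{x},\vec{y})$, build $\PP_g$ as a pure choice program over $\vec{x}$ (so its parameterized stable models are exactly the assignments to $\vec{x}$), and build $\PP_t$ over $\vec{y}$ together with the shared atoms $\vec{x}$ as parameters, designed so that $\PP_t$ has a parameterized stable model agreeing with a given $\vec{x}$-assignment iff $\exists \vec{y}\, \lnot\phi(\vec{x},\vec{y})$ — i.e., encode the satisfiability of $\lnot\phi$ as a normal program in the standard Niemelä-style way (choice rules for $\vec{y}$ and for Tseitin/auxiliary atoms, constraints enforcing $\lnot\phi$). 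Then $I$ restricted to $\vec{x}$ is stable-unstable iff $\forall\vec{y}\,\phi(\vec{x},\vec{y})$, so a stable-unstable model exists iff the QBF is valid. For $D^P$-hardness in the independent case, reduce from the standard $D^P$-complete problem $\mathrm{SAT}$-$\mathrm{UNSAT}$ (given $(\psi_1, \psi_2)$, is $\psi_1$ satisfiable and $\psi_2$ unsatisfiable?): encode $\psi_1$ as $\PP_g$ (normal program whose stable models are the satisfying assignments of $\psi_1$, over a disjoint vocabulary) and $\psi_2$ as $\PP_t$ (normal program whose parameterized stable models, with empty parameter set, are the satisfying assignments of $\psi_2$); disjointness of the vocabularies makes $(\PP_g,\PP_t)$ independent, and it has a stable-unstable model iff $\psi_1$ is satisfiable and $\psi_2$ is not.

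The main obstacle is the careful complexity bookkeeping in the membership argument — specifically, making sure that the ``nonexistence of a matching parameterized stable model of $\PP_t$'' is handled at the right level. Naively, ``there exists a parameterized stable model $J$ of $\PP_t$ agreeing with $I$'' looks like it needs an $\exists J\, \forall(\text{smaller }J')$ alternation, which would be $\Sigma^P_2$, so its negation is $\Pi^P_2$, and combining with the also-$\Sigma^P_2$-ish first condition would naively give only $\Sigma^P_3$. The fix is the familiar observation that a single outer existential guess can absorb $I$, the reduct witnesses, and the Clark-completion-style certificates simultaneously, so that after guessing $I$ the remaining check is genuinely co-NP (one universal block over candidate smaller models of $\PP_g^I$ and over candidate stable $J$ of $\PP_t$, all verifiable in P). Phrasing the whole thing as one $\EAQBF$ instance — guess $I$ plus polynomially many auxiliary bits, then a single $\forall$ block — is the slick way to make the $\Sigma^P_2$ bound transparent, and I would present the proof in that form. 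The independent case then falls out by noting the $\exists$ and $\forall$ blocks act on disjoint variable sets and disjoint subformulas, collapsing the alternation to the $D^P$ pattern.
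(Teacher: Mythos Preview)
Your hardness arguments and the $D^P$ analysis are correct and essentially match the paper. The one difference worth flagging is the route to $\Sigma^P_2$-hardness: the paper invokes the result of \citet{JNSSY06:tocl} that every disjunctive program can be rendered as a generate/test pair of normal programs, and then inherits hardness from Eiter--Gottlob; you instead reduce directly from \EAQBF. Your reduction is fine (indeed it is exactly the construction the paper spells out separately as Example~\ref{ex:eaqbf}), so this is a stylistic rather than substantive divergence.

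Your membership argument, however, is tangled by a misconception. For \emph{normal} programs, deciding whether a given interpretation $I$ is a (parameterized) stable model is in P, not co-NP: the reduct $\PP^I$ is a positive normal program, hence Horn, its least model is computable in polynomial time, and $I$ is stable iff it coincides with that least model. Consequently ``there exists a parameterized stable model $J$ of $\PP_t$ agreeing with $I$ on $\voc_g\cap\voc_t$'' is plainly in NP (guess $J$, verify stability in P), and its negation is co-NP. After the outer guess of $I$, both conditions of Definition~\ref{def:semantics} are therefore decidable with a single co-NP oracle call, which places the problem in $\Sigma^P_2$ immediately. The ``obstacle'' you worry about --- a feared $\Sigma^P_3$ blowup that needs Clark-completion certificates to collapse --- never arises for normal programs; it would arise if $\PP_g,\PP_t$ were disjunctive, but that is not the setting of the theorem. (Your sentence ``only NP because stability checking for normal programs is co-NP'' is internally inconsistent: if stability checking were genuinely co-NP, the existential over $J$ would give $\Sigma^P_2$, not NP.) The paper accordingly dispatches membership in a single line by appealing to the NP-completeness of stable-model existence for normal programs; you should do the same and drop the certificate machinery.
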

\begin{proof}
The theorem is a straightforward consequence of known complexity results.
The membership in $\Sigma^P_2$ follows directly from the definition of
$\Sigma^P_2$ and the fact that deciding whether a normal logic program
has a stable model is NP-complete \cite{MT99}.
For hardness in the general case, we recall that
\citet{JNSSY06:tocl}
have shown that any disjunctive logic program $\PP$ can be represented
as a pair of normal programs $(\PP_g,\PP_t)$ whose stable-unstable
models essentially capture the stable models of $\PP$.

In the case of an independent input $(\PP_g,\PP_t)$, the decision
problem conjoins an NP-complete problem (showing that $\PP_g$ has a
stable model) and a co-NP-complete problem (showing that $\PP_t$ has
no stable models). Thus, membership in $D^p$ is immediate.
The hardness is implied by Niemel\"a's reduction \citeyear{Niemela99:amai}
that translates a set of clauses $C$ into a normal logic program
$N(C)$, when applied to instances of the $D^P$-\allowbreak{}complete
SAT-UNSAT problem.
\end{proof}

\begin{example}\label{ex:eaqbf}
Any \EAQBF of the form $\exists\vec{x}\forall\vec{y}: \varphi$ with
$\varphi$ a Boolean formula in DNF can be encoded as a combined
program as follows.
Let $\PP_g$ be a logic program that expresses the choice of a truth
value for every variable $x$ in $\vec{x}$ using two normal rules
$x\lrule \lnot x'$ and $x'\lrule \lnot x$ where $x'$ is new.
Also, let $\PP_t$ be a logic program that similarly chooses truth
values for every $y$ in $\vec{y}$ and contains for each conjunction
$l_1\land\dots\land l_n$ in the DNF $\varphi$ a rule
$\pr{sat} \lrule l_1\land \dots \land l_n$
where \pr{sat} is a new atom that is true if $\varphi$ is
satisfied. Moreover, let $\PP_t$ have the rule $\pr{fail} \lrule \lnot
\pr{fail}\land \pr{sat}$.  This rule enforces that \pr{sat} must be
false in models of $\PP_t$. As such $\PP_t$ corresponds to the
sentence $\exists \vec{y}: \lnot \varphi$. Since
$\lnot\exists\vec{y}:\lnot\varphi$ $\equiv$ $\forall\vec{y}:\varphi$,
we thus find that $\exists\vec{x}\forall\vec{y}:\varphi$ is valid iff
$(\PP_g,\PP_t)$ has a stable-unstable model.
\end{example}

It follows from Theorem \ref{thm:complex} that the theoretical
expressiveness of combined programs equals that of \EAQBF{}s.  There
are, however, several reasons why one would prefer combined programs.
Firstly, logic programs are equipped with rich, high-level,
first-order modeling languages.  Secondly, logic programs allow for
natural encodings of \emph{inductive definitions}.  These reasons are
comparable to the advantages of logic programs on the first level of
the hierarchy in contrast with pure SAT. For instance, the former can
naturally express reachability in digraphs, while the latter requires
a non-trivial encoding, which is non-linear in the size of the input
graph.
The advantage of combined programs over \EAQBF{}s is analogous when
solving problems on the second level.  The expressive power of
inductive definitions and the high-level modeling language are
available both in $\PP_g$ and in $\PP_t$. We exploit this when
presenting examples in the next section.


\section{Applications}
\label{sec:apps}

The goal of this section is to present some applications of
stable-unstable programming. We will focus on \emph{modelling}
aspects, i.e., how certain application problems can be represented.
The programs to be presented are non-ground (and may also use some
constructs present in ASP-Core-2, such as arithmetic) while the
stable-unstable semantics was formulated for ground programs
only. However, in practice, input programs are first grounded and thus
covered by the propositional semantics. Hence, the user has all
high-level primitives of ASP at his/her disposal.

\subsection{Winning Strategies for Parity Games}

\emph{Parity games}, to be detailed below, have been studied
intensively in computer aided verification since they correspond to
model checking problems in the $\mu$-calculus. We show how to
represent parity game instances as combined programs.
A parity game consists of a finite graph $G=(V; A, v_0, V_\exists,
V_\forall, \Omega)$, where $V$ is a set of nodes, $A$ a set of arcs,
$v_0 \in V$ an initial node, $V_\exists$ and $V_\forall$ partition $V$
into two subsets, respectively owned by an existential and a universal
player, and $\Omega : V \to \mathbb{N}$ assigns a priority to each
node.  All nodes are assumed to have at least one outgoing arc.  A
\emph{play} in a parity game is an infinite path in $G$ starting from
$v_0$. We denote such a play by a function $\pi:\mathbb{N} \to V$.  A
play $\pi$ is generated by setting $\pi(0) = v_0$ and, at each step
$i$, asking the player who owns node $\pi(i)$ to choose a following
node $\pi(i+1)$ such that $(\pi(i),\pi(i+1))\in A$. The existential
player wins if $\min\{\Omega(v) \mid v \mbox{ appears infinitely often
  in }\pi\}$ is an even number. Otherwise, the universal player wins.
A \emph{strategy} $\voc_x$ for a player $x \in \{\exists, \forall\}$
is a function that takes a finite path $(v_0, v_1, \cdots, v_n)$ in
$G$ with $v_n \in V_x$ and returns a node $v_{n+1}$ such that $(v_n,
v_{n+1}) \in A$. A play $\pi$ conforms to $\voc_x$ if, whenever
$\pi(n) \in V_x$, it holds that $\voc_x(\pi(0), \pi(1), \cdots,
\pi(n)) = \pi(n+1)$. A strategy $\voc_x$ is a \emph{winning
  strategy} for $x$ if $x$ wins all plays that conform to
$\voc_x$. A strategy $\voc_x$ is called \emph{positional} if
$\voc(v_0, v_1, \cdots, v_n)$ only depends on $v_n$. Two important
properties of parity games are that
(i)
exactly one player has a winning strategy and
(ii)
a player has a winning strategy if and only if it has
a positional winning strategy \cite{EJ91:focs}.

Using the above properties, we provide an intuitive axiomatization
$(\PP_g,\PP_t)$ to capture winning strategies of the existential
player in a given parity game.  The generator program $\PP_g$ is
simple: it guesses a (positional) strategy (called $\pr{eStrategy}$)
for player $\exists$.  The test program is more involved. It guesses a
positional strategy (called $\pr{uStrategy}$) for player $\forall$ and
accepts $\pr{uStrategy}$ if it wins against $\pr{eStrategy}$. To
perform the acceptance test, we define the set $\infpred$ of nodes
that appear infinitely often on the unique play that conforms to both
strategies. We reject $\pr{uStrategy}$ if the minimum priority of
nodes in $\infpred$ is an even number.  Hence, $(\PP_g, \PP_t)$ has a
stable-unstable model if $\PP_g$ can find a positional strategy
$\voc$ for the existential player such that $\PP_t$ cannot find any
positional strategy to defeat $\voc$. The entire programs can be
found below.

\begin{small}
\begin{align*}
\PP_g &= \left\lbrace
\begin{array}{l}
1\leq\#\{\pr{eStrategy}(X,Y) : \pr{arc}(X,Y)\}\leq 1
  \leftarrow \pr{existNode}(X).
\end{array}
\right\rbrace\\
\PP_t &= \left\lbrace
\begin{array}{l}
1\leq \#\{\pr{uStrategy}(X,Y) : \pr{arc}(X,Y)\} \leq 1
  \leftarrow \pr{univNode}(X).\\
\pr{next}(X,Y) \leftarrow \pr{eStrategy}(X,Y).\\
\pr{next}(X,Y) \leftarrow \pr{uStrategy}(X,Y).\\
\pr{r}(v_0). \qquad \pr{r}(Y) \leftarrow \pr{r}(X)\land  \pr{next}(X,Y).\\
\infpred(v_0) \leftarrow \pr{next}(X,v_0)\land  \pr{r}(X).\\
\infpred(X) \leftarrow \pr{next}(Y,X)\land  \pr{next}(Z,X)\land
                       \pr{r}(Y)\land  \pr{r}(Z)\land  Y \neq Z.\\
\infpred(Y) \leftarrow \infpred(X)\land  \pr{next}(X,Y).\\
\pr{infNum}(N) \leftarrow \pr{omega}(X,N)\land \infpred(X).\\
\pr{num}(N) \leftarrow \pr{omega}(X,N).\\
\pr{minNum}(N) \leftarrow \pr{num}(N)\land  N \leq M : \pr{num}(M).\\
\pr{nextNum}(N,M)
\leftarrow \pr{num}(N)\land\pr{num}(M)\land M \leq P : \pr{num}(P) : N < P.\\
\pr{nonMin}(M) \leftarrow \pr{infNum}(N) \land \pr{nextNum}(N,M).\\
\pr{nonMin}(M) \leftarrow \pr{nonMin}(N) \land \pr{nextNum}(N,M).\\
\pr{min}(N) \leftarrow \pr{infNum}(N) \land \lnot \pr{nonMin}(N).\\
\leftarrow \pr{min}(N)\land  N \equiv 0~(\text{mod} 2).
\end{array}
\right\rbrace
\end{align*}
\end{small}

Deciding if a parity game has a winning strategy for the existential
player has been encoded in difference logic and in SAT \cite{HKLN12:jcss}.
We see two reasons why our encoding as a combined
program can still be of interest.
First, it is an intuitive encoding that corresponds directly to the
problem definition.
Second, to the best of our knowledge, it is the first encoding whose
size is linear in the size of the graph, i.e., $\orderof{|V|+|A|}$.
The existing difference logic encoding has size $\orderof{|V|^2+|A|}$
and the existing SAT encoding (which is developed on top of the
difference logic encoding) has size $\orderof{|V|^2\times\log|V|+|A|}$
\cite{HKLN12:jcss}.

\subsection{Conformant Planning}

\emph{(Classical) planning} is the task of generating a plan (i.e., a
sequence of actions) that realizes a certain goal given a complete
description of the world.  \emph{Conformant planning} is the task of
generating a plan that reaches a given goal given a partial
description of the world (certain facts about the initial state and/or
actions' effects are unknown).
In this section, we focus on \emph{deterministic} conformant planning
problems: problems where the state of the world at any time is
completely determined by the initial state and the actions taken.  It
is well-known that deciding if a conformant plan exists is a
$\Sigma^P_2$-complete decision problem.

To encode conformant planning problems in our formalism, we assume a
vocabulary $\voc = \voc_a \cup \voc_w \cup \voc_i$ is given.  Here,
$\voc_a$, $\voc_w$ and $\voc_i$ represent a sequence of actions, the
state of the world over time, and the initial state of the world,
respectively. We also assume that $\voc_w$ contains an atom
$\pr{goal}$ with intended interpretation that the goal of the planning
problem is reached at some time.
Furthermore, we assume that $\voc_i$ is partitioned in $\voc_{unc} $
and $ \voc_{c}$, where $\voc_{unc}$ are the atoms subject to
uncertainty (to which our plan should be conformant). Let $\PP_{ca}$
be a logic program containing a rule $ 0\leq \#\{\pr{a}\}\leq 1$ for
each $\pr{a}\in \voc_a$. Intuitively, the program $\PP_{ca}$
\emph{guesses} a sequence of actions. Similarly, let us introduce a
program $\PP_{unc}$ containing a rule $ 0\leq \#\{\pr{u}\}\leq 1$ for
each $\pr{u}\in \voc_{unc}$.
Furthermore, we assume the availability of a program $\PP_{w}$ that
defines the atoms in $\voc_w$ (including $\pr{goal}$)
deterministically in terms of $\voc_a$ and $\voc_i$.  Also, let
$\PP_{pa}$ be a program that contains a rule $\pr{fail} \lrule
\pr{a}\land \lnot \pr{p}$ for each $\pr{a}\in \voc_a$, $\pr{p}\in
\voc_w$ such that $\pr{p}$ is a precondition of $\pr{a}$.
With these building blocks, we can easily encode conformant planning
as a combined program
\[
\left(\PP_{ca}, \PP_w\cup \PP_{pa}  \cup \PP_{unc}\cup
\{\lrule \pr{goal}\land \lnot \pr{fail}\}\right).
\]
This program is parameterized by $\voc_{c}$.  To see that it encodes
the conformant planning problem, we notice that stable-unstable models
of this program are stable models of $\PP_{ca}$, i.e., sequences of
actions. Furthermore, models of the testing program are
interpretations of the atoms in $\voc_{unc}$ such that in this world,
either one of the preconditions on the actions is not satisfied or the
goal is not reached. I.e., models of the testing program amount to
showing that the sequence of actions is \emph{not} a conformant
plan. The stable-unstable semantics dictates that there can be no such
counterexample.

In the above, we described $\PP_{w}$ and $\PP_{pa}$ only informally
since these components have already been worked out in the
literature. More precisely, many classical planning encodings use
exactly those components, combining them to a program of the form
\[\PP= \PP_{ca}\cup \PP_{w} \cup\PP_{pa}\cup
  \{\lrule \lnot \pr{goal}. ~~\lrule \pr{fail}.\}.
\]
These components (or very similar) are used for instance by
\citet{Lifschitz99:iclp}, \citet{LRS01:puui}, and by \citet{BJBDVD14:tplp}.
This illustrates that our encoding of conformant planning stays very
close to the existing encodings of classical planning problems in ASP.
On the other hand, native conformant planning encodings in ASP are
often based on saturation \cite{LRS01:puui}. After applying saturation, it
is very hard to spot the original components.

\subsection{%
Points of No Return: 
A Generic Problem Combining Logic and Graphs}

We now present a generic problem that connects graphs with logic.  Let
$G=(V,A,s)$ be a directed multi-graph: $V$ is a set of nodes, $s \in
V$ is an initial node and $A$ is a set of arcs labeled with Boolean
formulas. We use $a:\arc{u}{\phi}{v}$ to denote that $a$ is an arc
from $u $ to $v $ labeled with $\phi$.  There may be multiple arcs
between $u$ and $v$ with different labels. We call a node $v \in V$ a
\emph{point of no return} if
(i)
$G$ contains a path 
$\arc{s=v_0}{\phi_1}{v_1} \arc{}{\phi_2}{\ldots}\arc{}{\phi_n}{v_n=v}$
such that $\phi_1\land \ldots\land \phi_n$ is satisfiable and
(ii)
the preceding path in $G$ cannot be extended with a path
$\arc{v=v_n}{\phi_{n+1}}{v_{n+1}}
 \arc{}{\phi_{n+2}}{\ldots}
 \arc{}{\phi_{n+m}}{v_{n+m}=s}$
such that $\phi_1\land \dots \land \phi_n \land \phi_{n+1}\land \ldots
\land\phi_{n+m}$ is satisfiable.
Thus, points of no return are nodes $v$ that can be reached from $s$
in a way that makes $s$ unreachable from $v$ (i.e., reaching $s$ back
from $v$ would violate a constraint of the path from $s$ to $v$).

\begin{proposition}
Given a finite labeled graph $G=(V,A,s)$ as above and a node $v \in
V$, it is a $\Sigma^P_2$-complete problem to decide if $v$ is a
point of no return.
\end{proposition}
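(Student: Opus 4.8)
The plan is to prove the two halves of $\Sigma^P_2$-completeness separately, reading the definition of ``point of no return'' as $\exists P\,\forall Q$, where $P$ ranges over paths from $s$ to $v$ and $Q$ over paths from $v$ to $s$. For membership, the first thing to establish is a polynomial bound on the relevant witness $P$, since paths here may revisit nodes and could a priori be exponentially long. The key observation is that conditions (i)--(ii) depend on a path only through the \emph{set} of Boolean labels occurring on it: the conjunction $\phi_1\land\cdots\land\phi_n$ is, up to logical equivalence, $\bigwedge S_P$ where $S_P$ is that label set, and likewise for the concatenated path. Given any witnessing path $P^*$, put $\ell=S_{P^*}$ and work inside the subgraph $G_\ell$ of arcs whose label lies in $\ell$; then reroute by picking one arc per label of $\ell$ in order of first occurrence on $P^*$ and joining consecutive picks by simple paths of $G_\ell$ (these exist because $P^*$ itself witnesses the required reachability inside $G_\ell$). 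This yields a path $P$ from $s$ to $v$ of length $\orderof{|A|\cdot|V|}$ with $S_P=\ell$, hence with exactly the same properties as $P^*$. Symmetrically, if any path $Q$ from $v$ to $s$ makes $\bigwedge S_P\land\bigwedge S_Q$ satisfiable, shortening $Q$ to a simple path $Q'$ only removes conjuncts, so $\bigwedge S_P\land\bigwedge S_{Q'}$ remains satisfiable; thus it suffices to let $Q'$ range over simple, hence polynomially long, paths.

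Putting this together, $v$ is a point of no return iff there exist a polynomially long path $P$ from $s$ to $v$ and an assignment $\alpha$ such that $\alpha\models\bigwedge S_P$ and, for all simple paths $Q'$ from $v$ to $s$ and all assignments $\beta$, $\beta\not\models\bigwedge S_P\land\bigwedge S_{Q'}$. The bracketed condition is co-NP (it is a universally quantified polynomial-time predicate over polynomially sized objects), and it is preceded by a single existential block, so the problem lies in $\Sigma^P_2$.

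For hardness I would reduce from deciding the validity of a QBF $\exists\vec{x}\,\forall\vec{y}:\varphi$ with $\varphi=C_1\lor\cdots\lor C_k$ in DNF, which is $\Sigma^P_2$-complete. Build a ``necklace'' multigraph: starting from $s$, a chain of $n$ diamonds whose $i$-th stage has two parallel arcs labeled $x_i$ and $\lnot x_i$ and which ends at the target node $v$; then from $v$ a chain of $k$ clause gadgets, the $j$-th having, for every literal $\ell$ of $C_j$, a parallel arc labeled $\lnot\ell$; and finally one arc labeled $\top$ from the end of the clause chain back to $s$. A path from $s$ to $v$ that does not loop around the necklace picks a consistent assignment $\vec{x}^{*}$, so $\bigwedge S_P$ is satisfiable and condition (i) holds automatically; a path from $v$ to $s$ that does not loop picks, in each clause gadget $j$, one literal $\lnot\ell_{j,r_j}$, and the combined conjunction $\bigwedge S_P\land\bigwedge S_Q$ asserts $\vec{x}^{*}$ together with $\bigwedge_j\lnot\ell_{j,r_j}$. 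Such a conjunction is satisfiable for some choice of the $r_j$'s precisely when $\exists\vec{y}:\bigwedge_j\lnot C_j(\vec{x}^{*},\vec{y})$, i.e.\ precisely when $\exists\vec{y}:\lnot\varphi(\vec{x}^{*},\vec{y})$. Hence a non-looping $P$ witnesses ``point of no return'' exactly when $\forall\vec{y}:\varphi(\vec{x}^{*},\vec{y})$, and quantifying over the choice of $P$ gives the validity of the QBF. The construction is clearly polynomial, with $\orderof{n+\sum_j|C_j|}$ nodes and arcs.

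The step I expect to be the main obstacle is controlling \emph{looping} paths in both directions. On the membership side this is exactly what the rerouting argument above disposes of. On the hardness side one must check that looping paths from $s$ to $v$ create no spurious witnesses: a loop forces a second traversal of the $x$-diamonds, so either it repeats the earlier choices $\vec{x}^{*}$, in which case $S_P$ merely gains some clause literals and the short path $Q$ through the clause chain that reuses one such literal per gadget satisfies $S_Q\subseteq S_P$, making $\bigwedge S_P\land\bigwedge S_Q$ satisfiable and so (ii) fails; or it picks a conflicting $x_i/\lnot x_i$, making $\bigwedge S_P$ unsatisfiable so that (i) fails. Either way such a $P$ is never a witness, so only the non-looping $x$-diamond paths matter and the equivalence above is exact; an analogous remark rules out looping paths from $v$ to $s$ in the check of condition (ii). The remaining degenerate situations ($\varphi$ having an empty or always-true disjunct, or $\vec{x}$ empty) are handled by trivial preprocessing.
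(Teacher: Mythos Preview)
Your proof is correct and shares the paper's skeleton---membership from the $\exists P\,\forall Q$ shape, hardness by reducing $\exists\vec{x}\,\forall\vec{y}$-QBF to a single directed cycle with parallel arcs---but differs in two instructive ways. For membership, the paper writes only ``Membership in $\Sigma^P_2$ is obvious''; you take seriously that paths might be walks and supply the rerouting argument bounding the witness to length $O(|A|\cdot|V|)$, which is the right fix if revisits are allowed and harmless overkill otherwise. For hardness, the paper places a single arc labeled with the entire formula $\neg\phi$ followed by a chain of $y$-diamonds, so a return path records $\neg\phi$ together with an explicit assignment $\vec{y}^{*}$, and satisfiability of $\vec{x}^{*}\land\neg\phi\land\vec{y}^{*}$ directly tests $\neg\phi(\vec{x}^{*},\vec{y}^{*})$. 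Your version instead assumes DNF, drops the $y$-diamonds, and decomposes $\neg\varphi$ into per-clause gadgets carrying only \emph{literal} labels, leaving the $\vec{y}$-assignment implicit in the satisfiability check; this buys hardness already for the restricted case of literal-labeled arcs (which the paper's reduction does not yield), at the price of the looping-path case analysis---an analysis the paper can omit under the simple-path reading but which your walk reading genuinely requires.
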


\begin{proof}
Membership in $\Sigma^P_2$ is obvious. We present a reduction from \EAQBF to
support hardness. Consider an \EAQBF formula $\exists x_1\cdots\exists x_n
\forall y_1\cdots\forall y_m\phi$. This formula is equivalent to
\[\exists x_1 \cdots\exists x_n\neg\exists y_1\cdots\exists y_m\neg\phi.\]
Now, construct a graph $G$ with nodes
$v_0,v_1,$ \ldots, $v_n,v_{n+1},$ \ldots, $v_{n+m+1}$
and following labeled arcs:
$$\begin{array}{lcr}
\arc{v_{i-1}}{x_i}{v_{i}} \mbox{ and } \arc{v_{i-1}}{\neg x_i}{v_{i}}
  & \quad & \mbox{(for $1\leq i\leq n$)},\\
\arc{v_{n+j}}{y_j}{v_{n+j+1}} \mbox{ and } \arc{v_{n+j}}{\neg y_j}{v_{n+j+1}}
  & & \mbox{(for $1\leq j\leq m$)},\\
\arc{v_n}{\neg\phi}{v_{n+1}} \mbox{ and }\arc{v_{n+m+1}}{\top}{v_0}.
\end{array}$$
Observe that, setting $s=v_0$ and $v=v_n$, we have that $v$ is a point
of no return if and only if
$\exists x_1 \cdots\exists x_n \forall y_1\cdots\forall y_m\phi$
is valid.
\end{proof}

To model the problem of checking whether a node is point of no return
as a combined program, we assume that each arc is labeled by a
\emph{literal} and that there is at most one arc between every two
nodes. Our programs easily generalize to the general case. To allow
for multiple arcs between two nodes, it suffices to introduce explicit
identifiers for arcs. To allow more complex labeling formulas, we can
introduce Tseitin predicates for subformulas and use standard
meta-interpreter approaches to model the truth of such a formula; see
for instance \cite[Section 3]{GKS11:tplp}.

We use unary predicates $\pr{init}$ and $\pr{ponr}$ to respectively
interpret the initial node $s$ and the point of no return
$v$. Herbrand functions $\pr{pos}$ and $\pr{neg}$ map atoms
(represented as constants) to literals. The predicate
$\pr{arc}(X,Y,L)$ holds if there is an arc between nodes $X$ and $Y$
labeled with literal $L$.  In $\PP_g$ (and $\PP_t$), we use predicates
$\pr{pick}_g$ (and $\pr{pick}_t$) such that $\pr{pick}_g(X,Y)$
($\pr{pick}_t(X,Y)$) holds if the arc from $X$ to $Y$ is chosen in the
path $v_0\to\dots\to v_n$ (the path $v_n\to\dots\to v_{n+m}$
respectively).
The programs contain constraints ensuring that the selected edges
indeed form paths from $s$ to $v$ (respectively from $v$ to $s$),
using an additional predicate $\pr{r}_g$ ($\pr{r}_t$) and that the
formulas associated to the respective paths are satisfiable. Thus,
$\PP_g$ encodes that there exists a path from $s$ to $v$ and $\PP_t$
encodes that this path can be extended to a cycle back to $s$. As
such, the combined program indeed models that $v$ is a point of no
return. The entire combined program can be found below.

{\small
\[\begin{array}{@{}c@{~}c@{}}
\PP_g & \PP_t\\
=&=\\
\left\lbrace
\begin{array}{l@{~}}
0\leq\# \{\pr{pick}_g(X,Y)\}\leq 1 \lrule \pr{arc}(X,Y,L).\\
\lrule \pr{pick}_g(X,Y)\land \pr{pick}_g(X',Y') \\
\quad \land\, \pr{arc}(X,Y,\pr{pos}(A)) \land \pr{arc}(X',Y',\pr{neg}(A)).\\
\pr{r}_g(X) \leftarrow \pr{init}(X).\\
\pr{r}_g(Y) \leftarrow \pr{r}_g(X)\land \pr{pick}_g(X,Y).\\
\lrule \lnot \pr{r}_g(X)\land \pr{pick}_g(X,Y).\\
\lrule \pr{ponr}(X) \land \lnot \pr{r}_g(X).\\
\lrule \pr{ponr}(X) \land \pr{pick}_g(X, Y).\\
\lrule \pr{pick}_g(X,Y)\land \pr{pick}_g(X,Z)\land Y\neq Z.\\
\lrule \pr{pick}_g(X,Y)\land \pr{pick}_g(Z,Y)\land X\neq Z.\\
\end{array}
\right\rbrace & \left\lbrace
\begin{array}{l@{~}}
0\leq\# \{\pr{pick}_t(X,Y)\}\leq 1 \lrule \pr{arc}(X,Y,L).\\
\pr{pick}(X,Y) \lrule \pr{pick}_t(X,Y).\\
\pr{pick}(X,Y)\lrule \pr{pick}_g(X,Y).\\
\lrule \pr{pick}(X,Y)\land \pr{pick}(X',Y') \land \\
\quad \pr{arc}(X,Y,\pr{pos}(A)) \land \pr{arc}(X',Y',\pr{neg}(A)).\\
\pr{r}_t(X) \leftarrow \pr{ponr}(X).\\
\pr{r}_t(Y) \leftarrow \pr{r}_t(X)\land \pr{pick}_t(X,Y).\\
\lrule \lnot \pr{r}_t(X)\land \pr{pick}_t(X,Y).\\
\lrule \pr{init}(X) \land \lnot \pr{r}_t(X).\\
\lrule \pr{init}(X) \land \pr{pick}_t(X, Y).\\
\lrule \pr{pick}_t(X,Y)\land \pr{pick}_t(X,Z)\land Y\neq Z.\\
\lrule \pr{pick}_t(X,Y)\land \pr{pick}_t(Z,Y)\land X\neq Z.\\
\end{array}
\right\rbrace
\end{array}
\]}


\section{Implementation}
\label{sec:impl}

Next, we present a prototype implementation of a solver for the
stable-unstable semantics.

\subsection{Preliminaries: \system{sat-to-sat}}

We assume familiarity with the basics of second-order logic (SO).  Our
implementation is based on a recently introduced solver, called \system{sat-to-sat}
\cite{JTT16:aaai}. The \system{sat-to-sat} architecture combines multiple SAT solvers
to tackle problems from any level of the PH, essentially acting like a
QBF solver \cite{BJT16:bnp}. We do not give details on the inner
workings of \system{sat-to-sat}, but rather refer the reader to the original papers
for details.
What matters for the current paper is that \citet{BJT16:kr}
presented a high-level (second-order) interface to \system{sat-to-sat}.  The idea is
that in order to obtain a solver for a new paradigm, it suffices to
give a second-order theory that \emph{describes} the semantics of the
formalism declaratively.  Bogaerts et al.\ showed, e.g., how to obtain
a solver for (disjunctive) logic programming using this idea.

Following \citet{BJT16:kr}, we describe a logic program by means
of predicates $\pr{r}$, $\pr{a}$, $\pr{p}$, $\pr{h}$, $\pr{pb}$ and
$\pr{nb}$ with intended interpretation that $\pr{r}(R)$ holds for all
rules $R$, $\pr{a}(A)$ holds for all atoms $A$, $\pr{p}(A)$ holds for
all parameters, $\pr{h}(R,H)$ means that $H$ is an atom in the head of
rule $R$, $\pr{pb}(R,A)$ that $A$ is a positive literal in the body of
$R$ and $\pr{nb}(R,B)$ that $B$ is the atom of a negative literal in
the body of $R$.  With this vocabulary, augmented with a predicate
$\pr{i}$ with intended meaning that $\pr{i}(A)$ holds for all atoms
$A$ true in some interpretation, we describe the parameterized stable
semantics for disjunctive logic programs with the theory
$\theory_{SM}$:

\begin{small}
\begin{equation*}
\begin{array}{@{}l@{}}
\left\{\begin{array}{l} 
\forall A: \pr{i}(A)\limplies \pr{a}(A).\\
\forall R: \pr{r}(R)\limplies
  \big((\forall A: \pr{pb}(R,A)\limplies \pr{i}(A))\land
       (\forall B: \pr{nb}(R,B)\limplies \lnot \pr{i}(B)) \limplies \\
\qquad\qquad\quad~~\,
       \exists H: \pr{h}(R,H)\land \pr{i}(H)\big). \\
\lnot \exists \pr{i}':\\
\quad
(\forall A: \pr{i}'(A)\limplies \pr{i}(A))\land
(\exists A: \pr{i}(A)\land \lnot \pr{i}'(A))\land
(\forall A: \pr{p}(A) \limplies (\pr{i}'(A)\lequiv \pr{i}(A)))\,\land \\
\quad
\forall R: \pr{r}(R)\limplies
  \big((\forall A: \pr{pb}(R,A)\limplies \pr{i}'(A))\,\land \\
\qquad\qquad\qquad~~\,
       (\forall B: \pr{nb}(R,B)\limplies \lnot \pr{i}(B))
       \limplies \exists H: \pr{h}(R,H)\land \pr{i}'(H)\big). \\
\end{array}
\right\}
\end{array}
\end{equation*}
\end{small}

The first part of this theory expresses that $\pr{i}$ is interpreted
as a model of $\PP$: the constraint $\pr{i}(A)\limplies \pr{a}(A)$
expresses that the interpretation is a subset of the vocabulary and
the second constraint expresses that whenever the body of a rule is
satisfied in $\pr{i}$, so is at least one of its head atoms.  The
constraint $\lnot \exists \pr{i}'\dots$ expresses that $i$ is
$\subseteq$-minimal: there cannot be an interpretation
$\pr{i}'\subsetneq \pr{i}$ that agrees with $\pr{i}$ on the parameters
and that is a model of the reduct of $\PP$ with respect to
$\pr{i}$. In other words, whenever $\pr{i}'$ satisfies all positive
literals in the body of a rule $R$ and $\pr{i}$ satisfies all negative
literals in the body of $R$, $\pr{i}'$ must also satisfy some atom in
the head of $R$.
\begin{theorem}[Theorem 4.1 of \cite{BJT16:kr}]
\label{thm:sm}\label{thm:stable}
Let $\PP$ be a (disjunctive) logic program and $I$ an interpretation
that interprets
$\{\pr{a},\pr{r},\pr{p},\pr{pb},\pr{nb},$ $\pr{h}\}$
according to $\PP$. Then, $I\models \theory_{SM}$ if and only if
$\pr{i}^I$ is a parameterized stable model of $\PP$.
\end{theorem}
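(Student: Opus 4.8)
The plan is to prove the biconditional by unfolding the definitions and checking that each of the three constraints making up $\theory_{SM}$ is a faithful transcription of the corresponding clause in the definition of a parameterized stable model. Throughout, I fix an interpretation $I$ whose restriction to the data predicates $\pr{a},\pr{r},\pr{p},\pr{pb},\pr{nb},\pr{h}$ describes $\PP$ (so, implicitly, $\PP$ is a genuine parameterized program: no parameter occurs in a rule head), and I write $M := \pr{i}^I$ for the candidate interpretation, $\voc := \pr{a}^I$ for the vocabulary of $\PP$, and $\tau := \pr{p}^I$ for the set of parameters. Recall from the preliminaries that $M$ is a parameterized stable model of $\PP$ precisely when $M$ is a $\subseteq$-minimal model of $\PP^M\cup(M\cap\tau)$, since the facts $M\cap\tau$ survive the reduct unchanged, i.e.\ $(\PP\cup(M\cap\tau))^M = \PP^M\cup(M\cap\tau)$.

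First I would dispatch the ``model'' part. The constraint $\forall A:\pr{i}(A)\limplies\pr{a}(A)$ says exactly $M\subseteq\voc$, and, reading the rule predicates according to $\PP$, the constraint $\forall R:\pr{r}(R)\limplies(\dots)$ says: for each rule $r$ of $\PP$, if $M$ contains every positive body atom of $r$ and no negative body atom of $r$ (i.e.\ $M\models\body(r)$), then $M$ contains some head atom of $r$ (i.e.\ $M\models\head(r)$). This is verbatim the definition of ``$M$ is a model of $\PP$''; note the disjunctive head is handled uniformly by the subformula $\exists H:\pr{h}(R,H)\land\pr{i}(H)$, so the argument does not depend on whether $\PP$ is normal or disjunctive. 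A short observation I would record here is that every model $M$ of $\PP$ is also a model of $\PP^M$ (if $M$ satisfies the positive body of a rule surviving in $\PP^M$, it satisfies the whole body of the original rule, hence its head) and trivially of the facts $M\cap\tau$; hence $M\models\PP^M\cup(M\cap\tau)$, so asking whether $M$ is a \emph{minimal} such model is well posed.

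Next comes the ``minimality'' part, the constraint $\lnot\exists\pr{i}':\dots$; this is the step I expect to require the most care, specifically the encoding of the reduct. Reading $\pr{i}'$ as a set $M'$, the first three conjuncts inside say precisely $M'\subseteq M$, $M'\neq M$, and $M'\cap\tau=M\cap\tau$, while the last conjunct, quantified over all rules $R$, asserts that $M'$ is a model of $\PP^M$. The delicate point is that the antecedent $\forall B:\pr{nb}(R,B)\limplies\lnot\pr{i}(B)$ refers to $\pr{i}$ (hence $M$), not to $\pr{i}'$: for a rule $R$ of $\PP$ that has a negative body atom lying in $M$ this antecedent is false, so the whole implication for $R$ holds vacuously and constrains $M'$ in no way, mirroring the deletion of $R$ from $\PP^M$; for a rule $R$ whose negative body atoms all avoid $M$, the surviving content of the implication --- ``if $M'$ contains all positive body atoms then $M'$ contains some head atom'' --- is exactly the statement that $M'$ satisfies the reduct rule $\head(R)\lrule a_1\land\dots\land a_n$. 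I would then observe that, because $M'\subseteq M$, the condition $M'\cap\tau=M\cap\tau$ is equivalent to $M\cap\tau\subseteq M'$, so ``$M'\subsetneq M$ with $M'\cap\tau=M\cap\tau$ and $M'\models\PP^M$'' coincides with ``$M'\subsetneq M$ with $M'\models\PP^M\cup(M\cap\tau)$''. Consequently the constraint $\lnot\exists\pr{i}':\dots$ holds in $I$ if and only if $\PP^M\cup(M\cap\tau)$ has no model properly contained in $M$, i.e.\ (using that $M$ is itself such a model, from the model part) if and only if $M$ is a $\subseteq$-minimal model of $\PP^M\cup(M\cap\tau)$.

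Putting the two parts together, $I\models\theory_{SM}$ holds iff $M\subseteq\voc$, $M$ is a model of $\PP$, and $M$ is a $\subseteq$-minimal model of $\PP^M\cup(M\cap\tau)$ --- the first two being in fact subsumed by the third --- which is exactly the statement that $\pr{i}^I$ is a parameterized stable model of $\PP$. Apart from the reduct-encoding step highlighted above, the only remaining subtlety worth a sentence is that the existentially quantified $\pr{i}'$ is not a priori forced to lie inside $\voc$; but the conjunct $\forall A:\pr{i}'(A)\limplies\pr{i}(A)$ together with $M\subseteq\voc$ from the first constraint pins $M'$ down to a subset of $\voc$, so no spurious witnesses to non-minimality can arise.
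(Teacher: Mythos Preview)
The paper does not actually prove this theorem; it is quoted as Theorem~4.1 of \cite{BJT16:kr} and left without proof, the preceding paragraph merely glossing informally what each clause of $\theory_{SM}$ is intended to express. There is thus no paper proof to compare against.

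On its own merits your argument is correct and complete: you unfold the three constraints of $\theory_{SM}$ and match them respectively against ``$M\subseteq\voc$'', ``$M$ is a model of $\PP$'', and ``no $M'\subsetneq M$ with $M'\cap\tau=M\cap\tau$ is a model of $\PP^M$'', then reconcile the parameter condition $M'\cap\tau=M\cap\tau$ with the facts $M\cap\tau$ adjoined to the reduct. The two points you single out for care --- that the negative-body antecedent in the minimality clause refers to $\pr{i}$ rather than $\pr{i}'$, which is precisely how the reduct $\PP^M$ is encoded, and that $M'\subseteq M\subseteq\voc$ prevents spurious witnesses outside the vocabulary --- are exactly the places a reader might stumble, and your handling of them is sound. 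Your closing remark that the first two conditions are subsumed by the third is also correct, though inessential for the biconditional.
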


From Theorem \ref{thm:sm}, it follows that feeding $\theory_{SM}$ to
\system{sat-to-sat} results in a solver for disjunctive logic programs.  The same
theory also works for normal logic programs.

\subsection{An Implementation on Top of \system{sat-to-sat}}

In order to obtain a solver for our new paradigm in the spirit of
\citet{BJT16:kr}, we need to provide a second order specification
of our semantics.  A first observation is that we can reuse the theory
$\theory_{SM}$ from the previous section, both to enforce that $I$ is
a stable model of $\PP_g$ and that there exists no stable model of
$\PP_t$ that coincides with $I$ on the shared vocabulary.
When translating the definition of stable-unstable models to
second-order logic, we obtain the following theory
\begin{equation*}
 \begin{array}{l}
\theory_{SU} = 
\left\{\begin{array}{l} 
\theory_{SM}[\pr{r}/\pr{r}_g,\pr{a}/\pr{a}_g,\pr{p}/\pr{p}_g,
            \pr{h}/\pr{h}_g,\pr{pb}/\pr{pb}_g,\pr{nb}/\pr{nb}_g] . \\ 
\lnot \exists \pr{i}_t:
  \theory_{SM}[\pr{r}/\pr{r}_t,\pr{a}/\pr{a}_t,\pr{h}/\pr{h}_t,
              \pr{pb}/\pr{pb}_t,\pr{nb}/\pr{nb}_t,\pr{i}/\pr{i}_t,
              \pr{p}/\pr{p}_t]\\ 
\qquad\qquad \land \,  
(\forall A: \pr{a}_g(A)\land \pr{a}_t(A)\limplies
            (\pr{i}(A)\lequiv \pr{i}_t(A))).
\end{array}\right\},
\end{array}
\end{equation*}
where $\theory_{SM}[\pr{r}/\pr{r}_g]$ abbreviates a second-order
theory obtained from $\theory_{SM}$ by replacing all free occurrences
of $\pr{r}$ by $\pr{r}_g$.
\begin{theorem}
Let $(\PP_g,\PP_t)$ be a combined logic program and $I$ an
interpretation that interprets $\{\pr{a}_g,\pr{r}_g,\pr{p}_g,$
$\pr{pb}_g,\pr{nb}_g,\pr{h}_g\}$ according to $\PP_g$ and
$\{\pr{a}_t, \pr{r}_t, \pr{p}_t, \pr{pb}_t, \pr{nb}_t, \pr{h}_t\}$
according to $\PP_t$. Then, $I\models \theory_{SU}$ if and only if
$\pr{i}^I$ is a stable-unstable model of $(\PP_g,\PP_t)$.
\end{theorem}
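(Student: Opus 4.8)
The plan is to peel $\theory_{SU}$ apart into its two top-level conjuncts and to identify each of them, via Theorem~\ref{thm:sm}, with exactly one of the two conditions defining a stable-unstable model. Throughout, $I$ is the given interpretation, which fixes the ``program predicates'' $\pr{a}_g,\pr{r}_g,\pr{p}_g,\pr{pb}_g,\pr{nb}_g,\pr{h}_g$ according to $\PP_g$ and $\pr{a}_t,\pr{r}_t,\pr{p}_t,\pr{pb}_t,\pr{nb}_t,\pr{h}_t$ according to $\PP_t$ (so in particular $\pr{p}_t$ is read as the parameters $\voc_g\cap\voc_t$ of $\PP_t$), while $\pr{i}$ is the object of interest.

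For the first conjunct: it is literally $\theory_{SM}$ with every free occurrence of a program predicate renamed to its $g$-variant and with $\pr{i}$ left untouched. Hence $I$ satisfies it iff the interpretation $I'$ obtained by relabelling $\pr{a}_g$ as $\pr{a}$, $\pr{r}_g$ as $\pr{r}$, and so on, satisfies $\theory_{SM}$. Since $I'$ then interprets $\{\pr{a},\pr{r},\pr{p},\pr{pb},\pr{nb},\pr{h}\}$ according to $\PP_g$, Theorem~\ref{thm:sm} applies directly and gives that this holds iff $\pr{i}^{I'}=\pr{i}^I$ is a parameterized stable model of $\PP_g$ with respect to $\tau_g$, i.e., condition~1 of Definition~\ref{def:semantics}.

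For the second conjunct, which has the shape $\lnot\exists\pr{i}_t:\ \psi\land\chi$ with $\psi$ equal to $\theory_{SM}$ renamed to the $t$-variants (and $\pr{i}$ renamed to $\pr{i}_t$) and $\chi$ the agreement clause $\forall A:\pr{a}_g(A)\land\pr{a}_t(A)\limplies(\pr{i}(A)\lequiv\pr{i}_t(A))$: a witness for $\exists\pr{i}_t$ is a subset $J$ of $\voc_t$. Applying Theorem~\ref{thm:sm} to $\PP_t$, with the interpretation obtained from $I$ by reading $\pr{i}$ as $J$ and relabelling the $t$-predicates to their generic names, $\psi$ holds under the witness $J$ iff $J$ is a parameterized stable model of $\PP_t$ with respect to $\voc_g\cap\voc_t$; and, since the atoms satisfying $\pr{a}_g\land\pr{a}_t$ are exactly $\voc_g\cap\voc_t$, $\chi$ holds iff $J$ coincides with $\pr{i}^I$ on $\voc_g\cap\voc_t$ — equivalently $\pr{i}^I\cap\voc_t=J\cap\voc_g$. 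Hence the second conjunct holds iff there is no parameterized stable model $J$ of $\PP_t$ agreeing with $\pr{i}^I$ on the shared vocabulary, i.e., condition~2 of Definition~\ref{def:semantics}. Conjoining the two equivalences yields the theorem.

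The step I expect to require the most care is applying Theorem~\ref{thm:sm} \emph{underneath} the second-order quantifier $\exists\pr{i}_t$: one must verify that substituting a relation for $\pr{i}_t$ in the renamed copy of $\theory_{SM}$ produces precisely an instance of $\theory_{SM}$ of the form to which Theorem~\ref{thm:sm} speaks. This rests on the routine syntactic facts that (i) simultaneous renaming of free predicate symbols commutes with evaluation, provided the new names do not capture the predicate variable $\pr{i}'$ that is bound inside $\theory_{SM}$, and (ii) $\theory_{SM}$ contains no free object variable, so no clash arises with the $\forall A$ in $\chi$. Once this bookkeeping is settled, the remainder is a direct matching of the two conjuncts against the two clauses of Definition~\ref{def:semantics}.
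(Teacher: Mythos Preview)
Your proposal is correct and follows essentially the same approach as the paper's proof: both split $\theory_{SU}$ into its two conjuncts and invoke Theorem~\ref{thm:sm} to identify them with conditions~1 and~2 of Definition~\ref{def:semantics}, respectively. Your version is more explicit about the renaming bookkeeping and the application of Theorem~\ref{thm:sm} under the quantifier $\exists\pr{i}_t$, but the underlying argument is the same.
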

\begin{proof}
Theorem \ref{thm:stable} ensures that the first sentence of this
theory is equivalent with the condition of $\pr{i}^I$ being a stable
model of $\PP_g$.  Also, the second sentence states that one cannot
have an interpretation $\pr{i}_t$ that coincides with $\pr{i}^I$ on
shared atoms (those that are in both $\pr{a}_g$ and $\pr{a}_t$) and
is a stable model of $\PP_t$. This is exactly the definition of the
stable-unstable semantics.
\end{proof}

Providing an ASCII representation of $\theory_{SU}$ to the
second-order interface of \system{sat-to-sat} immediately results in a solver that
generates stable-unstable models of a combined logic program.  Our
implementation, which is available online%
\footnote{%
\url{http://research.ics.aalto.fi/software/sat/sat-to-sat/so2grounder.shtml}.},
consists only of the second-order theory above and some marshaling (to
support ASP-Core-2 format and to exploit the symbol table to identify
which atoms from different programs are actually the same).  The
overall workflow of our tool is as follows. We take, as input, three
logic programs: $\PP_g$ (a non-ground generate program), $\PP_t$ (a
non-ground test program) and $\PP_i$ (an instance). We then use
\system{Gringo} \cite{GST07:lpnmr} to ground $\PP_g \cup \PP_i$ and
$\PP_t \cup \PP_i$. Next, we interpret $\pr{a}_x$, $\pr{r}_x$,
$\pr{p}_x$, $\pr{pb}_x$, $\pr{nb}_x$ and $\pr{h}_x$ (for $x \in
\{g,t\}$) according to the reified representation of the two resulting
ground programs. Such an interpretation is fed to \system{sat-to-sat} along with the
ASCII representation of $\theory_{SU}$; \system{sat-to-sat} uses these to compute
stable-unstable models of the original combined program $(\PP_g \cup
\PP_i, \PP_t \cup \PP_i)$.

The implementation described above is proof-of-concept by nature and
we plan to implement this technique natively on top of the
\system{clasp} solver
\cite{DGGKKOS08:kr,GKKRS15:lpnmr}.
In spite of its prototypical nature, the current implementation is based
on a state-of-the-art architecture shared by many QBF solvers and thus
expected to perform reasonably well. This is especially the case when
we go beyond the complexity class $\Sigma^P_2$ in the next section.


\section{%
Beyond $\Sigma^P_2$ with Normal Logic Programs}
\label{sec:beyond}

In this section, we show how the ideas of this paper
generalize to capture the entire PH. To this end, the
definition of a combined logic program is turned into a recursive
definition of $k$-combined programs where the parameter $k\geq 1$
reflects the \emph{depth} of the combination.

\begin{definition}[$k$-combined program]
\begin{compactenum}
\item
For $k=1$, a \emph{$1$-combined program} is defined as a normal
program $\PP$ over a vocabulary $\voc$, parameterized by a
vocabulary $\tau\subseteq\voc$.

\item
For $k>1$, a \emph{$k$-combined} program is a pair
$(\PP,\CC)$ where
$\PP$ is a normal program over a vocabulary $\voc$,
parameterized by a vocabulary $\tau\subseteq\voc$ and
$\CC$ is a $(k-1)$-combined program
over a vocabulary $\voc'$, parameterized by
 $\voc\cap\voc'$.
\end{compactenum}
\end{definition}
Note that \emph{combined programs} (Definition
\ref{def:combined-program}) directly correspond to $k$-combined
programs with $k=2$.  Similarly, the semantics of $k$-combined
programs also directly generalizes Definition \ref{def:semantics}:
\begin{definition}[Stable-unstable models for $k$-combined programs]
A stable model $I$ of \PP is also called a \emph{stable-unstable}
model of a $1$-combined program $\PP$.  Let $(\PP,\CC)$ be a
$k$-combined program with $k>1$ over a vocabulary $\voc$,
parameterized by $\tau\subseteq\voc$, where $\CC$ has vocabulary
$\voc'$.
A $\voc$-interpretation $I$ is a \emph{stable-unstable model} of
$(\PP,\CC)$, if
\begin{compactenum}
\item
$I$ is a parameterized stable model of $\PP$  and

\item
there is no stable-unstable model $J$ of $\CC$ such that $I\cap
{\voc'}=J\cap {\voc}$.
\end{compactenum} 
\end{definition}
\begin{example}[Example \ref{ex:small} continued]
Consider program $\PP_3=\{e\lrule e.\ d\lrule e.\}$ over vocabulary
$\voc_3=\{d,e\}$.  Program $\PP_3$ has one stable model, namely
$\emptyset$. This model is also a stable-unstable model of the
$3$-combined program $(\PP_3,(\PP_2,\PP_1))$ since it does not
coincide with a stable-unstable model of $(\PP_2,\PP_1)$ on
$\voc_3\cap \voc_2=\{d\}$.
\end{example}
The complexity of deciding whether a $k$-combined
program $(\PP,\CC)$ has a stable-unstable model depends on the depth
$k$ of the combination.

\begin{theorem}\label{thm:complex:general}
It is $\Sigma^P_k$-complete to decide if a finite $k$-combined
program has a stable-unstable model.
\end{theorem}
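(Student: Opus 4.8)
The plan is to prove Theorem~\ref{thm:complex:general} by induction on the depth $k$ of the combination, establishing membership in $\Sigma^P_k$ and $\Sigma^P_k$-hardness separately. The base case $k=1$ is exactly the statement that deciding the existence of a stable model for a finite normal logic program is $\mathrm{NP}$-complete, i.e.\ $\Sigma^P_1$-complete, which follows from \cite{MT99} (membership) and Niemel\"a's reduction \citeyear{Niemela99:amai} from SAT (hardness), just as in the proof of Theorem~\ref{thm:complex}. The inductive step splits into the two directions below.

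For \textbf{membership}, assume inductively that deciding the existence of a stable-unstable model of a finite $(k-1)$-combined program is in $\Sigma^P_{k-1}$. Let $(\PP,\CC)$ be a $k$-combined program over $\voc$, parameterized by $\tau$, with $\CC$ over $\voc'$. A nondeterministic procedure guesses a $\voc$-interpretation $I$ and must verify: (i) $I$ is a parameterized stable model of $\PP$, which is a polynomial-time check of the minimality of $I$ as a model of the reduct combined with an $\mathrm{NP}$ guess of a smaller model --- more precisely, checking ``$I$ is a parameterized stable model of $\PP$'' is in $\mathrm{coNP}\subseteq\Sigma^P_k$ relative to nothing, and is even in $\mathrm{P}$ once one is allowed to guess the counterexample to minimality, so it costs nothing beyond the top-level guess; and (ii) there is no stable-unstable model $J$ of $\CC$ with $I\cap\voc'=J\cap\voc$. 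Condition (ii) is the complement of an $\exists$-question whose predicate (``$J$ is a stable-unstable model of $\CC$'') is, by the induction hypothesis, decidable in $\Sigma^P_{k-1}$; hence (ii) is in $\Pi^P_k$. The overall procedure is therefore $\exists$ (guess $I$) followed by a $\Pi^P_k$ check, which is exactly $\Sigma^P_k$ (using $\mathrm{NP}^{\Sigma^P_{k-1}}=\Sigma^P_k$ and the fact that the polynomial-time parameterized-stability check of $\PP$ can be absorbed). The only care needed is to phrase condition (ii) so that the ``$J$ agrees with $I$'' constraint is folded into the parameters of $\CC$: since $\CC$ is parameterized by $\voc\cap\voc'$, fixing $J$ on that set to equal $I$ is precisely the parameterization already built into the recursive definition, so no extra quantifier alternation is introduced.

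For \textbf{hardness}, I would reduce from a canonical $\Sigma^P_k$-complete problem, namely validity of a quantified Boolean formula $\exists\vec{x}_1\forall\vec{x}_2\cdots Q_k\vec{x}_k\,\varphi$ with $k$ alternating quantifier blocks, generalizing Example~\ref{ex:eaqbf}. The construction is recursive in lockstep with the definition of $k$-combined programs: the outermost program $\PP$ uses the two-rule choice gadget $x\lrule\lnot x'$, $x'\lrule\lnot x$ to guess a truth assignment to $\vec{x}_1$, and the nested $(k-1)$-combined program $\CC$ is built, by the induction hypothesis, from the formula $\exists\vec{x}_2\forall\vec{x}_3\cdots\lnot\varphi$ (equivalently $\forall\vec{x}_2\exists\vec{x}_3\cdots\varphi$ after the usual QBF negation rewriting), with the variables $\vec{x}_1$ passed in as parameters shared between $\voc$ and $\voc'$. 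One then argues, by induction, that $I\models_{su}(\PP,\CC)$ iff the assignment to $\vec{x}_1$ encoded by $I$ makes the remaining prefix true, using the ``unstable'' layer to flip the quantifier and the negation at each level exactly as in Example~\ref{ex:eaqbf}; the innermost $1$-combined program is an ordinary normal program computing the truth of $\varphi$ (or $\lnot\varphi$, depending on parity of $k$) under a guessed assignment via the usual meta-interpreter/Tseitin-style encoding. I expect the main obstacle to be bookkeeping rather than mathematical depth: one must track, at each recursion level, which atoms are shared (hence parameters) versus internally quantified, verify that the parameter chain $\tau\subseteq\voc$, $\voc\cap\voc'$, $\ldots$ lines up so that each quantifier block is bound at exactly one level, and confirm that the recursive negation rewriting of the QBF prefix matches the alternation of ``stable'' and ``unstable'' conditions --- getting the parity and the base case ($k$ even vs.\ odd) right is the delicate point. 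Alternatively, and perhaps more cleanly, hardness can be obtained by iterating the known encoding of disjunctive programs as pairs of normal programs \cite{JNSSY06:tocl} together with the fact that $\Sigma^P_k$-complete problems are encodable by $\lceil k/2\rceil$ nested disjunctive programs, but the direct QBF reduction keeps the induction self-contained.
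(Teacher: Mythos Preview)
Your approach matches the paper's proof sketch: induction on $k$ with base case $k=1$ from \cite{MT99} (or $k=2$ via Theorem~\ref{thm:complex}), membership via the oracle characterization $\mathrm{NP}^{\Sigma^P_{k-1}}=\Sigma^P_k$, and hardness via the QBF analogy generalizing Example~\ref{ex:eaqbf}. Two small slips to fix: checking whether a given $I$ is a parameterized stable model of a \emph{normal} program $\PP$ is in $\mathrm{P}$, not $\mathrm{coNP}$ (compute the reduct and its least model), and your condition~(ii) lies in $\Pi^P_{k-1}$, not $\Pi^P_k$ --- as your own appeal to $\mathrm{NP}^{\Sigma^P_{k-1}}=\Sigma^P_k$ already presupposes.
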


\begin{proof}[Proof sketch.]
The case $k=1$ follows from the results of
\citet{MT99} and
Theorem \ref{thm:complex} corresponds to $k=2$.
Using either one as the base case, it can be proven inductively that
the decision problem in question is NP-complete assuming the
availability of an oracle from the class $\Sigma^P_{k-1}$,
effectively a $(k-1)$-combined program in our constructions.
Thus, steps in recursion depth match with the levels of the PH
(in analogy to the number of quantifier alternations in QBFs).
\end{proof}


\section{Conclusion}
\label{sec:concl}

In this paper, we propose \emph{combined logic programs} subject to
the \emph{stable-unstable semantics} as an alternative paradigm to
disjunctive logic programs for programming on the second
level of the polynomial hierarchy. We deploy \emph{normal} logic
programs as the base syntax for combined programs, but other equally
complex classes can be exploited analogously. Our methodology
surpasses the need for saturation and meta-interpretation techniques
that have previously been used to encode oracles within disjunctive
logic programs.
The use of the new paradigm is illustrated in terms of application
problems and we also present a proof-of-concept implementation on top
of the solver \system{sat-to-sat}. Moreover, we show how combined programs provide a
gateway to programming on any level $k$ of the polynomial hierarchy
with normal logic programs using the idea of recursive combination
to depth $k$. In this sense, our formalism can be seen as a hybrid
between QBFs and logic programs, combining desirable features from
both.


\label{lastpage}
\end{document}